\documentclass[11pt]{article}

\usepackage[preprint]{acl}
\usepackage[T1]{fontenc}

\usepackage[utf8]{inputenc}

\usepackage{microtype}

\usepackage{inconsolata}

\usepackage{graphicx}

%
%

\usepackage{amsmath}
\usepackage{amssymb}
\usepackage{amsthm}
\usepackage{enumitem}
\usepackage{algpseudocode}
\usepackage{booktabs}       
\usepackage{subcaption}
\usepackage{nameref}      
\usepackage{algorithm}
\usepackage{multirow}       
\usepackage{url}            
\usepackage{hyperref}       
\usepackage{cleveref}
\usepackage{adjustbox}

\newtheorem{theorem}{Theorem}
\Crefname{equation}{Eq.}{Eqs.}

\newcommand{\ours}{REINFORCE++\,}
\newcommand{\varn}{REINFORCE++\textsubscript{\textit{w/} Baseline}\,}

\title{REINFORCE++: Stabilizing Critic-Free\\Policy Optimization with Global Normalization}


\author{Jian Hu \\
\texttt{janhu9527@gmail.com}
\And
Jason Klein Liu \\
\texttt{jasonkleinlove@gmail.com}
\AND
Haotian Xu \\
\texttt{1034351332@qq.com}
\And
Wei Shen \thanks{Corresponding author} \\
\texttt{shenwei0917@126.com }}

\begin{document}
\maketitle
\begin{abstract}
Reinforcement Learning from Human Feedback~(RLHF) plays a crucial role in aligning Large Language Models~(LLMs). The dominant algorithm, Proximal Policy Optimization~(PPO), employs a critic network to estimate advantages, which introduces significant computational and memory overhead. To address this, a family of critic-free algorithms (e.g., GRPO, RLOO) has emerged. However, these methods typically rely on \textit{prompt-level (local)} advantage normalization, which suffers from inaccurate advantage estimation, a tendency to overfit, and, as we show, is a theoretically biased estimator. To solve these challenges, we introduce \textbf{\ours}, a critic-free framework centered on \textbf{Global Advantage Normalization}. By normalizing advantages across the entire global batch rather than small, prompt-specific groups, our method provides a more stable and theoretically sound, \textit{effectively unbiased} estimate (whose bias vanishes as batch size increases). We introduce two variants: \textbf{\ours}, a highly efficient and general algorithm ($k \ge 1$) for general-domain RLHF, and \textbf{\varn}, a robust group-sampling variant ($k > 1$) for complex reasoning tasks. Our empirical evaluation demonstrates that each variant shows superior stability and performance in its respective domain, outperforming existing methods and even PPO in complex agentic settings.
\end{abstract}

\section{Introduction}

Reinforcement Learning from Human Feedback (RLHF) is a key technique for aligning Large Language Models~(LLMs) with human values and preferences ~\citep{vemprala2023chatgpt,achiam2023gpt, Ouyang2022, shen2024policy,shen2025exploring,hu2024openrlhf}. Despite the emergence of non-RL alternatives like DPO ~\citep{rafailov2023direct}, state-of-the-art applications such as ChatGPT/GPT-4 ~\citep{vemprala2023chatgpt, openai2023gpt4}, Claude ~\citep{anthropic2023introducing}, and Gemini ~\citep{team2023gemini} continue to rely on RL algorithms.

The dominant RL algorithm in this space is Proximal Policy Optimization (PPO) ~\citep{schulman2017proximal}. PPO employs an "Actor-Critic" architecture, where a dedicated critic network is trained to estimate the advantage function (i.e., the extent to which an action yields a higher expected return than the mean). However, this critic network introduces substantial computational overhead and memory demands, making training very expensive and limiting large model alignment in small-scale clusters \citep{shao2024deepseekmath}.

To address PPO's efficiency issues, a family of REINFORCE-based methods has emerged without the critic model, including ReMax~\citep{li2023remax}, RLOO~\citep{2024Back}, and GRPO~\citep{shao2024deepseekmath}. These algorithms remove the critic network and instead estimate the advantage using statistics from multiple responses to the same prompt.

However, this critic-free approach introduces a new, critical challenge: \textit{advantage estimation}. Methods like GRPO and RLOO use \textbf{prompt-level (local) normalization}, calculating the baseline and standard deviation only from the small group of responses generated for a single prompt. This approach suffers from several critical flaws. First, as demonstrated in Appendix \Cref{appendix:proof_GRPO}, the prompt-level advantage estimator is mathematically biased, as the numerator (centered reward) and the denominator (local standard deviation) are not independent. Second, the advantage estimate is sensitive to the small number of samples in the local group (e.g., $k=4$ or $k=8$), where the high variance leads to instability.. If all samples receive similar rewards, the local $\text{std}(\cdot)$ approaches zero, causing the advantage to explode and destabilizing training. Finally, the method encourages overfitting to specific prompts, as the policy is optimized to "win" within its local group rather than achieving a globally high reward, which harms generalization.

We introduce two distinct variants of this framework, each tailored to a specific use case. \ours is an efficient algorithm with $k \ge 1$. In its $k=1$ configuration (detailed in Algorithm 1), it maximizes prompt diversity for general-domain RLHF. It can also be applied with $k>1$ (Section 4.2), using the same global normalization principle. \varn is a specialized \textit{variant} for complex tasks ($k>1$) that benefits from \textbf{group sampling}. It fixes the flaws of GRPO by first subtracting the \textit{group mean} (for reward reshaping) and then normalizing by the \textit{global standard deviation} (for stability). It also employs a more stable $k_2$ for the $\mathrm{KL}$ estimator, as detailed in Appendix \Cref{appendix:kl_design}. In summary, our contributions are as follows:
\begin{itemize}
    \item We provide a theoretical proof that the prompt-level (local) advantage normalization used in methods like GRPO is a biased estimator (Appendix \Cref{appendix:proof_GRPO}).
    \item We propose \ours, a critic-free framework centered on Global Advantage Normalization, as a stable, efficient, and theoretically sound alternative.
    \item We present two specialized algorithms: \ours~($k \ge 1$) for efficient general-purpose RLHF and reasoning, and \varn~($k>1$) for robust complex agentic tasks.
    \item We empirically demonstrate that \ours achieves superior token-efficiency and Out-Of-Distribution~(OOD) generalization in its domain. At the same time, \varn prevents overfitting and outperforms both GRPO and PPO in complex agentic reasoning tasks.
\end{itemize}

\section{Background and Related Work}\label{sec:background}

\begin{figure*}[!ht]
    \centering
    \includegraphics[width=0.85\textwidth]{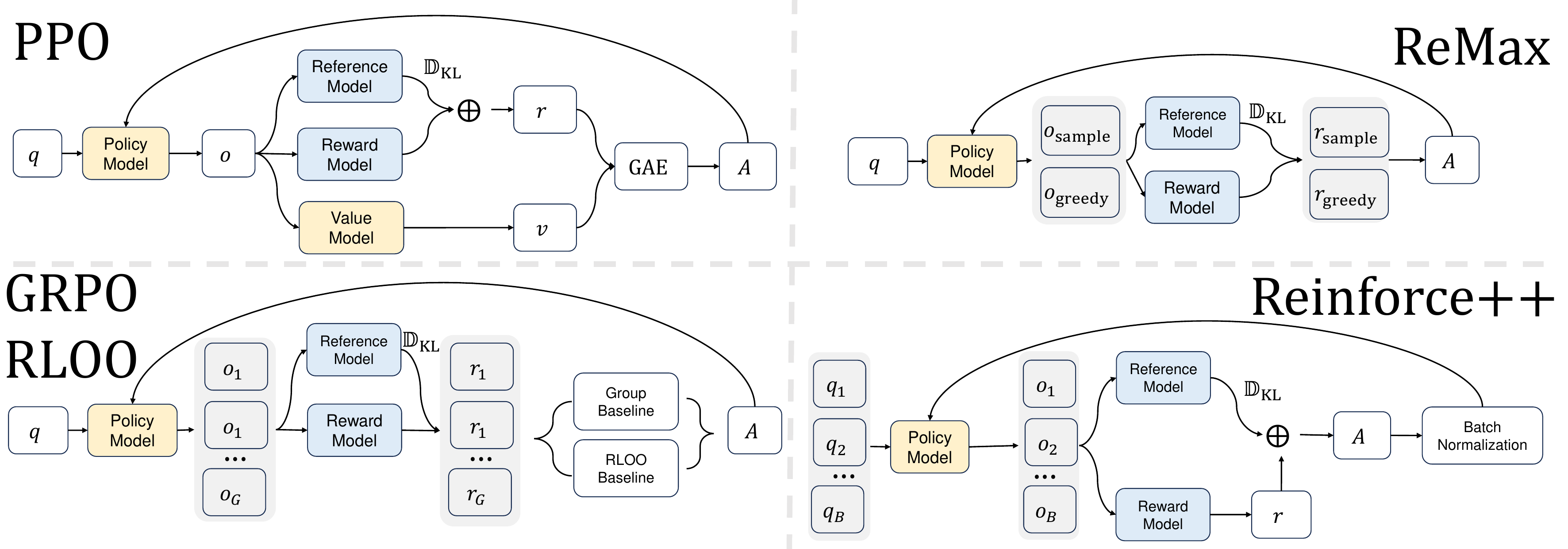}
    \caption{The comparison of PPO, ReMax, GRPO, RLOO, and \ours. \ours, as shown in the figure, removes the critic model and uses global batch normalization. \varn uses group sampling, similar to GRPO/RLOO, but replaces the local group baseline with a group-mean subtraction followed by global batch normalization.}
    \label{fig:main}
\end{figure*}

\subsection{PPO and Critic-Free RLHF}
PPO optimizes LLMs by maximizing the following surrogate objective:

\begin{equation}\label{eq:ppo}
\begin{adjustbox}{max width=\linewidth}
$
\begin{aligned}
\mathcal{L}_{\text{PPO}}(\theta)
&= \mathbb{E}_{q \sim P(Q),\, o \sim \pi_{\theta_{\text{old}}}(O|q)} \Bigg[
    \frac{1}{|o|} \sum_{t=1}^{|o|}
    \min \Big(
        s_t(\theta) A_t, \\[-2pt]
&\quad \text{clip}(s_t(\theta),\, 1 - \epsilon,\, 1 + \epsilon) A_t
    \Big)
\Bigg]
\end{aligned}
$
\end{adjustbox}
\end{equation}

where $s_t(\theta) = \frac{\pi_{\theta}(o_t | q, o_{<t})}{\pi_{\theta_{\text{old}}}(o_t | q, o_{<t})}$ is the probability ratio. The advantage $A_t$ is typically calculated using Generalized Advantage Estimation~(GAE) \citep{schulman2018highdimensionalcontinuouscontrolusing}:
\begin{equation}
    A_{q, o_t} = \sum_{l=0}^{\infty} (\gamma \lambda)^l \delta_{t+l}
\end{equation}
where $\delta_{q, o_t} = r_t + \gamma V(o_{t+1}) - V(o_t)$ is the temporal difference error, and $V(\cdot)$ is the critic network.

Critic-free methods in \Cref{fig:main} remove the critic $V(\cdot)$ and compute the advantage $A_t$ directly from rewards.
\begin{itemize}[leftmargin=*]
    \item \textbf{ReMax} \citep{li2023remax} uses a greedy decoding response $\hat{o}$ as the baseline: $A_{q,o_t} = r(o) - r(\hat{o})$.
    \item \textbf{RLOO} \citep{2024Back} samples $k$ responses and uses the mean of others as the baseline: $A_{q,o_t^{(i)}} = r(o^{(i)}) - \frac{1}{k-1}\sum_{j \neq i} r(o^{(j)})$.
    \item \textbf{GRPO} \citep{shao2024deepseekmath} also samples $k$ responses but normalizes the advantage using the mean and standard deviation of the \textit{local group}:
\end{itemize}
\begin{equation}\label{eq:grpo}
A_{q,o_t^{(i)}} = \frac{r(o^{(i)}) - \text{mean}\bigl(\{r(o^{(j)})\}_{j=1}^{k}\bigr)}{\text{std}\bigl(\{r(o^{(j)})\}_{j=1}^{k}\bigr) + \epsilon}
\end{equation}

\subsection{The Problem with Local Normalization}
The core issue with methods like GRPO lies in \Cref{eq:grpo}. The usage of \textbf{prompt-level (local) normalization} is problematic for three reasons:
\begin{enumerate}
    \item \textbf{Theoretical Bias}: As formally proven in Appendix \Cref{appendix:proof_GRPO}, the estimator is biased. Therefore, the numerator (centered reward) and the denominator (local standard deviation $\text{std}(\cdot)$) are not independent. The denominator's value is correlated with the rewards in the small group, introducing a systematic error in the advantage estimate \citep{mai2025agentrlscalinglaw}.
    \item \textbf{Practical Instability}: In practice, the group size $k$ is small (e.g., 4 or 8). If all sampled responses for a prompt happen to get similar rewards, the local $\text{std}(\cdot)$ approaches zero, causing the advantage to explode. It results in high variance and unstable training.
    \item \textbf{Task Overfitting}: The policy is rewarded for being "better than other samples from the same prompt," not for being "globally good." It can lead to overfitting on simple prompts where it's easy to generate diverse rewards, while failing to improve on complex prompts.
\end{enumerate}

\section{Method}\label{sec:method}

Our method addresses the instability of critic-free RLHF by replacing biased local normalization with stable global normalization. We propose two variants for different use cases.

\subsection{Global Normalization}

The primary \ours algorithm is designed for general-purpose RLHF. In its $k=1$ configuration, it is designed for maximum efficiency and prompt diversity.

It optimizes the PPO objective in \Cref{eq:ppo} but redefines the advantage $A_{q,o_t}$. We use the standard PPO reward formulation, which incorporates the k1-style KL penalty directly into the reward:
\begin{equation}\label{eq:adv}
A_{q,o_t} = r(o_{1:T},q) - \beta \cdot \sum_{i=t}^{T} \mathrm{KL}(i)
\end{equation}
where $\text{KL}(t) = \log\left(\frac{\pi_{\theta_{\text{old}}}(o_t | q, o_{<t})}{\pi_{ref}(o_t | q, o_{<t})}\right)$.

The key innovation is our normalization strategy. Instead of a non-existent (for $k=1$) or biased local norm, we use Global Advantage Normalization~\citep{andrychowicz2020matters}:
\begin{equation}\label{eq:batch_norm}
A_{q,o_t}^{\text{norm}} = \frac{A_{q,o_t} - \text{mean}\left(A|A \in \mathcal{D}_{\text{batch}}\right)}{\text{std}\left(A|A \in \mathcal{D}_{\text{batch}}\right) + \epsilon}
\end{equation}
This global normalization is the core of \ours. As the global batch size $\mathcal{D}_{\text{batch}}$ is typically large (e.g., 1024 or more), the $\text{mean}(\cdot)$ and $\text{std}(\cdot)$ converge to stable constants. It makes the gradient estimator \textit{effectively less biased} (as $N \to \infty$) and robust to outliers, drastically improving training stability (see Appendix \Cref{appendix:whyglobal}). 
The algorithm for the efficient $k=1$ case is detailed in Algorithm \Cref{algo:main}.

\begin{algorithm}
\caption{\ours when $k=1$}\label{algo:main}
\begin{algorithmic}[1]
\Require Initial policy model $\pi_{ref}$, reward models $R$, task prompts $\mathcal{D}$
\State policy model $\pi_{\theta} \gets \pi_{ref}$
\For {step $= 1, \ldots, M$}
    \State Sample a batch $\mathcal{D}_{batch}$ from $\mathcal{D}$
    \State Update the old policy model $\pi_{\text{old}} \gets \pi_{\theta}$
    \State \textbf{Sample output ($k=1$)} $o \sim \pi_{\text{old}}(\cdot \mid q)$ for each question $q \in \mathcal{D}_{batch}$
    \State Compute rewards $r_i$ for each sampled $o_i$ using $R$
    \State Compute advantage $A_{q,o_t}$ via \Cref{eq:adv}
    \State Normalize advantages globally across $\mathcal{D}_{batch}$ to get $A_{q,o_t}^{\text{norm}}$ via \Cref{eq:batch_norm}
    \For {iteration $= 1, \ldots, k$}
        \State Update $\pi_{\theta}$ by maximizing the objective in \Cref{eq:ppo} using $A_{q,o_t}^{\text{norm}}$
    \EndFor
\EndFor
\Ensure $\pi_{\theta}$
\end{algorithmic}
\end{algorithm}

\subsection{Local Baseline}
For more complex tasks, such as multi-step reasoning, sampling multiple responses per prompt ($k > 1$) can be beneficial \citep{yue2025vapo}. For this, we introduce \varn. This variant combines the benefits of group-sampling with the stability of global normalization. The advantage calculation is a two-step process:
\begin{itemize}
    \item \textbf{Group Mean Subtraction for Reshaping}: We first subtract the group mean reward. It serves as a local baseline, reshaping rewards to be robust to different reward scales (e.g., $0/1$ vs. $-1/1$).
    \begin{equation}
     A'_{q,o_t} = R_{q,o_t} - \mathrm{mean}_{\text{group}}(R_{q,o_t})
    \end{equation}
    \item \textbf{Global Batch Normalization for Stability}: We then normalize this initial advantage using the \textbf{global batch statistics}, not the unstable local group statistics.
    \begin{equation}
     A_{q,o_t}^{\text{norm}} = \frac{A'_{q,o_t} - \mathrm{mean}_{\text{batch}}(A')}{\text{std}_{\text{batch}}(A') + \epsilon}
    \end{equation}
\end{itemize}
This combination avoids the bias and instability of local standard deviation in GRPO.

Furthermore, this variant employs a separate KL loss term for regularization. We adopt the $k_{2}$ estimator. As shown in Appendix \Cref{appendix:kl_design}, the $k_{2}$ estimator provides a stable,  unbiased gradient for the Reverse KL divergence, unlike the $k_{3}$ estimator (used in GRPO), which is an unstable approximation \citep{liu2025rethinking}. The final objective is:
\begin{equation}
\mathcal{L} = \mathcal{L}_{\text{PPO}}(A^{\text{norm}}) - \lambda \cdot \mathcal{J}_{k_{2}{\,\text{as loss}}}(\theta)
\end{equation}
where $\mathcal{J}_{k_{2}\,\text{as loss}}(\theta)=\mathbb{E}[\frac{1}{2}(\log\frac{\pi_{\theta}}{\pi_{ref}})^{2}]$.

\subsection{Relationship with PPO}
\varn can be viewed as a simplified and more stable variant of PPO. It is formally equivalent to a PPO agent where: (1) The critic network is removed; (2) The GAE parameters are set to $\lambda=1$ and $\gamma=1$; and (3) A two-step global batch normalization is used as the baseline instead of a learned value function.

\subsection{Summary}

To address these challenges, we propose \ours, a method centered on a simple yet powerful idea without the critic model: \textbf{Global Advantage Normalization}. Instead of normalizing within a prompt's local group, \ours normalizes the advantage function across the \textit{entire global training batch}. This approach is:
\begin{itemize}
    \item \textbf{Theoretical Stability}: As the global batch size grows (e.g., $N=1024$), the batch mean and standard deviation converge to constants, resulting in an \textit{effectively unbiased} (bias vanishes as $N \to \infty$) and low-variance advantage estimator (see Appendix \Cref{appendix:whyglobal}).
    \item \textbf{Training Efficiency}: It retains the critic-free architecture, significantly reducing computational and memory overhead compared to PPO.
    \item \textbf{Strong Generalization}: Using a global baseline prevents overfitting to specific prompts and encourages a more robust policy.
\end{itemize}

\section{Experiments}\label{sec:experiment}

We evaluate our two algorithms, \ours and \varn, in their respective target domains using the OpenRLHF framework \citep{hu2024openrlhf}.

\subsection{General RLHF}
First, we evaluate the \ours in its $k=1$ configuration, focusing on general-domain RLHF, where efficiency and prompt diversity are key.

\paragraph{Experimental Setup}
We use an instruction-following policy model (Llama-3-8B-SFT) and refine it using a Bradley-Terry reward model \citep{bai2022training} trained on $\sim$700K human preference pairs. The policy is trained on 20,000 diverse prompts. We compare \ours ($k=1$) with other critic-free methods: GRPO ($k=4$), RLOO ($k=4$), and ReMax ($k=1$ + 1).

\begin{table*}[!htbp]
    \centering
    \begin{tabular}{l|cc|c}
    \toprule
                       & Score & Length  & Per Token    \\
    \midrule
    \textbf{\ours (k=1)} & 46.7        & 832   &  \textbf{0.0561} \\
    GRPO (k=4)         & \textbf{46.8} & 860       &  0.0544 \\
    RLOO (k=4)         & 44.6        & 866   &  0.0515 \\
    ReMax (k=1+1)      & 45.1        & \textbf{805}  &  0.0560 \\
    \bottomrule
    \end{tabular}
    \caption{Comparison on Chat-Arena-Hard \citep{li2024crowdsourced}. The single-sample \ours ($k=1$) achieves a top-tier score while being more token-efficient than the group-sampling GRPO.}
    \label{tab:general_exp}
\end{table*}

\paragraph{Experimental Results}
As shown in \Cref{tab:general_exp}, the single-sample \textbf{\ours ($k=1$)} achieves a score of 46.7, statistically tied with the group-sampling GRPO (46.8). However, it produces shorter responses (832 tokens vs. 860), resulting in a higher and more efficient per-token score (0.0561). This result highlights that for general tasks, group sampling ($k>1$) is not necessary and may even be suboptimal.

\paragraph{Results Analysis}
\Cref{fig:orm_reward_kl} shows the training dynamics. GRPO's reward rises quickly, but its KL divergence also increases rapidly, suggesting it is "hacking" the reward model. In contrast, \ours shows a more stable reward increase with a much smaller KL divergence. It highlights the stability of global normalization and its higher "KL-to-reward" conversion efficiency, avoiding the reward hacking seen in local normalization methods, which is often linked to length exploitation \citep{2024Length}.

\begin{figure}[!htbp]
    \centering
    \includegraphics[width=1.0\linewidth]{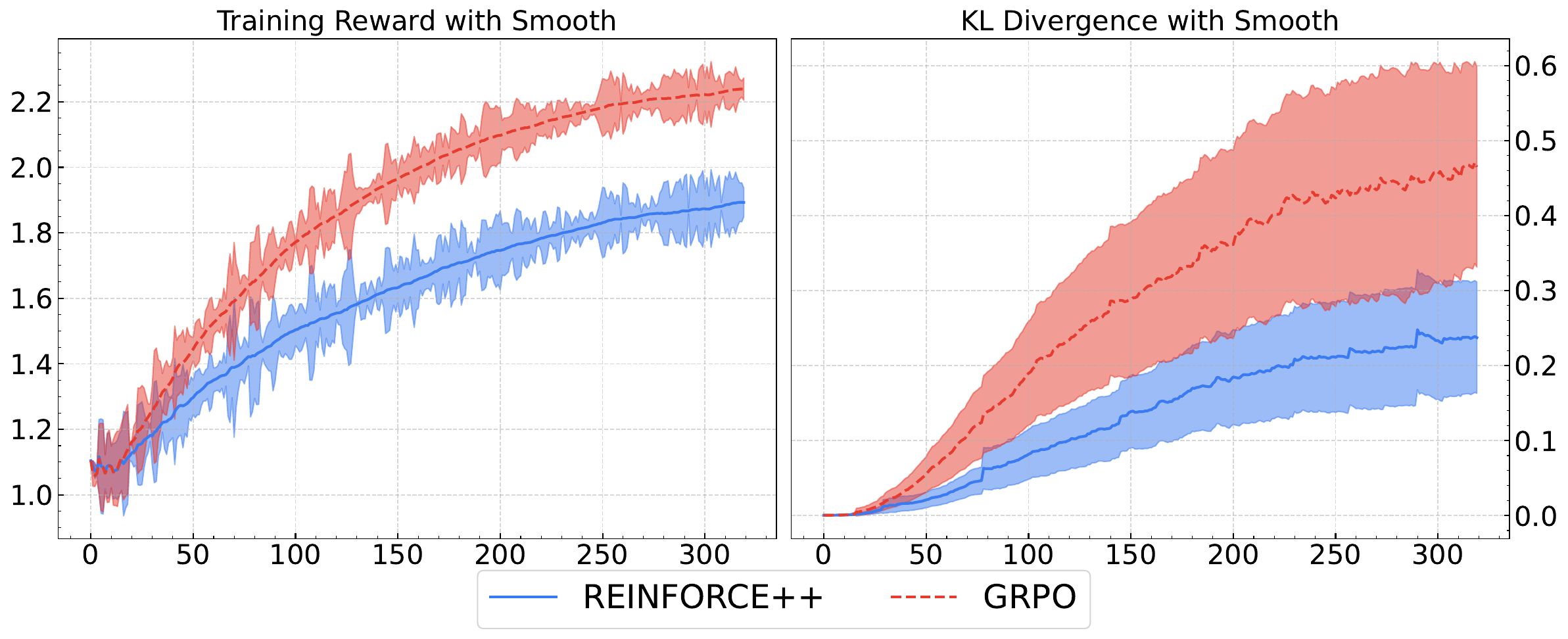}
    \caption{Comparison of smoothed Training Reward and KL Divergence. \ours ($k=1$) achieves strong rewards with significantly more stable (lower) KL divergence, avoiding GRPO's reward-hacking behavior.}
    \label{fig:orm_reward_kl}
\end{figure}



\begin{table*}[!ht]
    \centering
    \begin{tabular}{l|ccc}
    \toprule
                       & AIME-24 (Train) & \multicolumn{2}{c}{AIME-25 (Test)} \\
    Pass@N             & N = 1  &  N = 1     &  N = 16 \\
    \midrule
    GRPO               & \textbf{95.0}   & 0.0        & 0.4 \\
    \textbf{\ours ($k>1$)} & 71.0        & \textbf{2.5}& \textbf{40.0} \\ 
    \bottomrule
    \end{tabular}
    \caption{Comparison on a small training dataset. GRPO (local norm) overfits completely, while \ours (global norm) generalizes.}
    \label{tab:small_exp}
\end{table*}

\subsection{Reasoning Experiments} 
Next, we evaluate \ours (group-sample, $k>1$) in its target domain: complex reasoning, where group sampling is standard and reward signals are often sparse (e.g., 0/1). We compare it directly with GRPO ($k > 1$).

\subsubsection{Long Reasoning Task}
We use rule-based rewards (RLVR) in mathematical reasoning settings \citep{guo2025deepseek,seed2025seed1}.

\paragraph{Analysis on Small-Scale Datasets}
To test for overfitting, we trained on only 30 questions from AIME-24 and evaluated on AIME-25. As shown in \Cref{tab:small_exp}, GRPO (local norm) achieves a near-perfect 95.0\% on the training set but completely fails on the test set (0.0 Pass@1), demonstrating catastrophic overfitting. In contrast, \ours with global norm, while scoring lower on the training set (71.0), shows significantly better generalization (2.5 Pass@1, 40.0 Pass@16). 

\Cref{fig:details} confirms this. GRPO (left) immediately overfits, mastering the training questions in just a few steps. \ours (right) learns more gradually, enabled by the stable global normalization signal. 

\begin{figure*}[!htbp]
    \centering
    \begin{subfigure}[b]{0.5\linewidth}
        \centering
        \includegraphics[width=\textwidth]{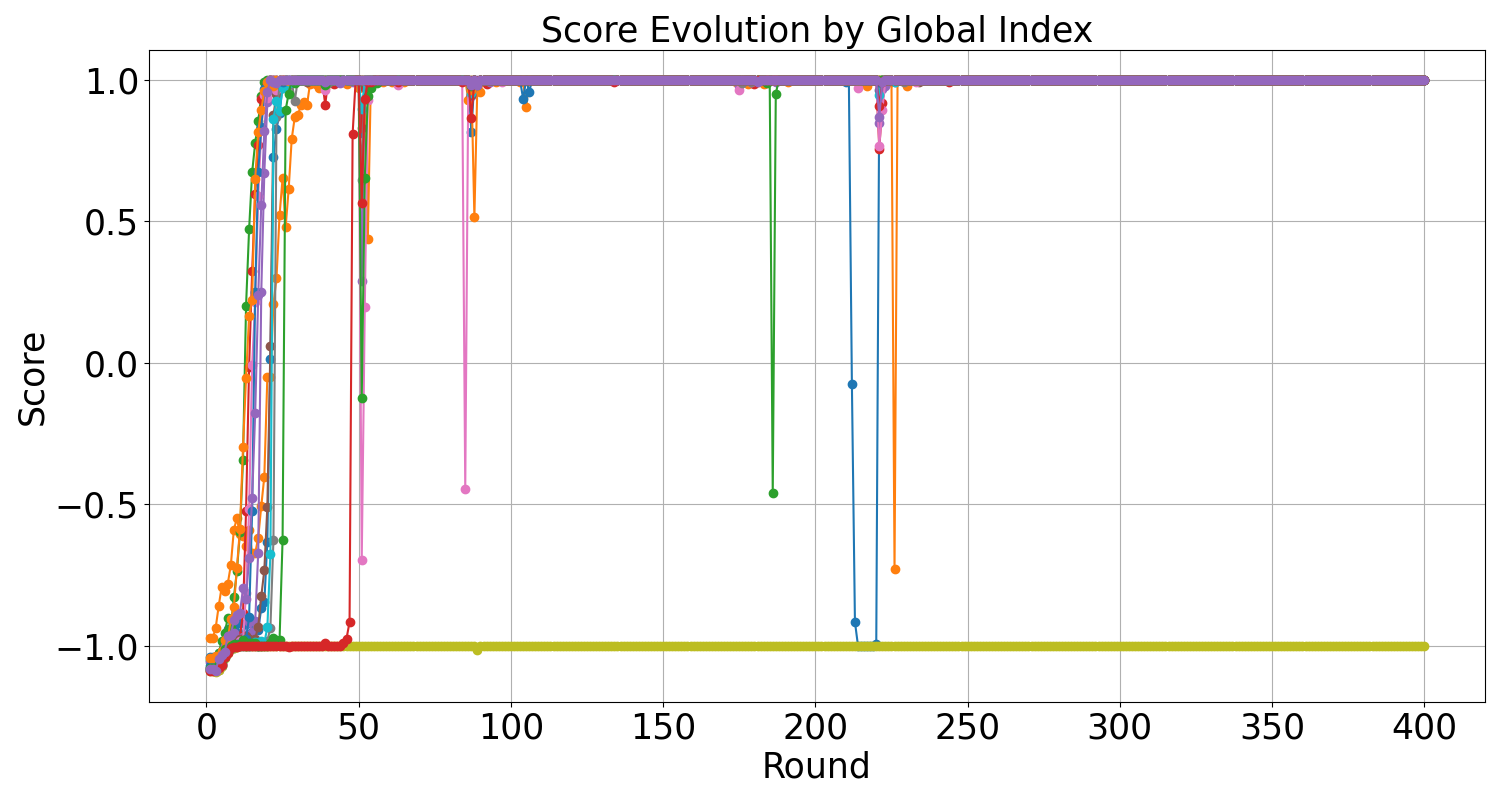}
    \end{subfigure}%
    \hfill
    \begin{subfigure}[b]{0.5\linewidth}
        \centering
        \includegraphics[width=\textwidth]{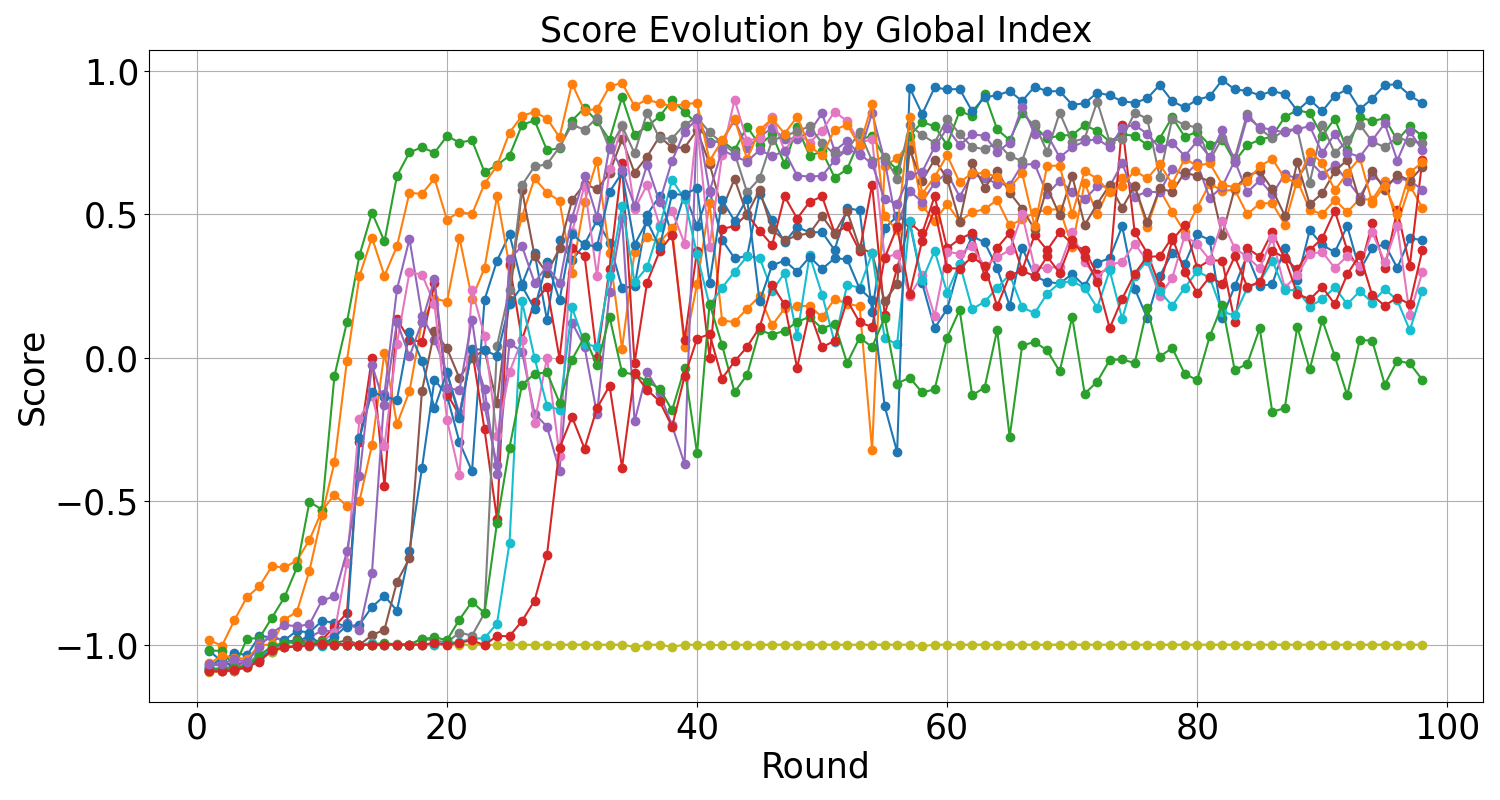}
    \end{subfigure}
    \caption{Training curves on a small prompt dataset. Left: GRPO (local norm) immediately overfits. Right: \ours (global norm) learns more stably.}
    \label{fig:details}
\end{figure*}

\paragraph{Logical Reasoning (K\&K Puzzles)}
We also tested on the Knights and Knaves (K\&K) puzzles \citep{xie2025logic, xie2024memorization}, where difficulty increases with the number of "people." \Cref{fig:logic_score} shows that while GRPO is competitive on easy tasks (2-3 people), its performance collapses on harder, OOD tasks (8 people). \ours is more robust, outperforming GRPO on all tasks with four or more people and achieving a much higher average score (62.1 vs. 55.7). 

\begin{figure}[!htbp]
    \centering
    \includegraphics[width=0.7\linewidth]{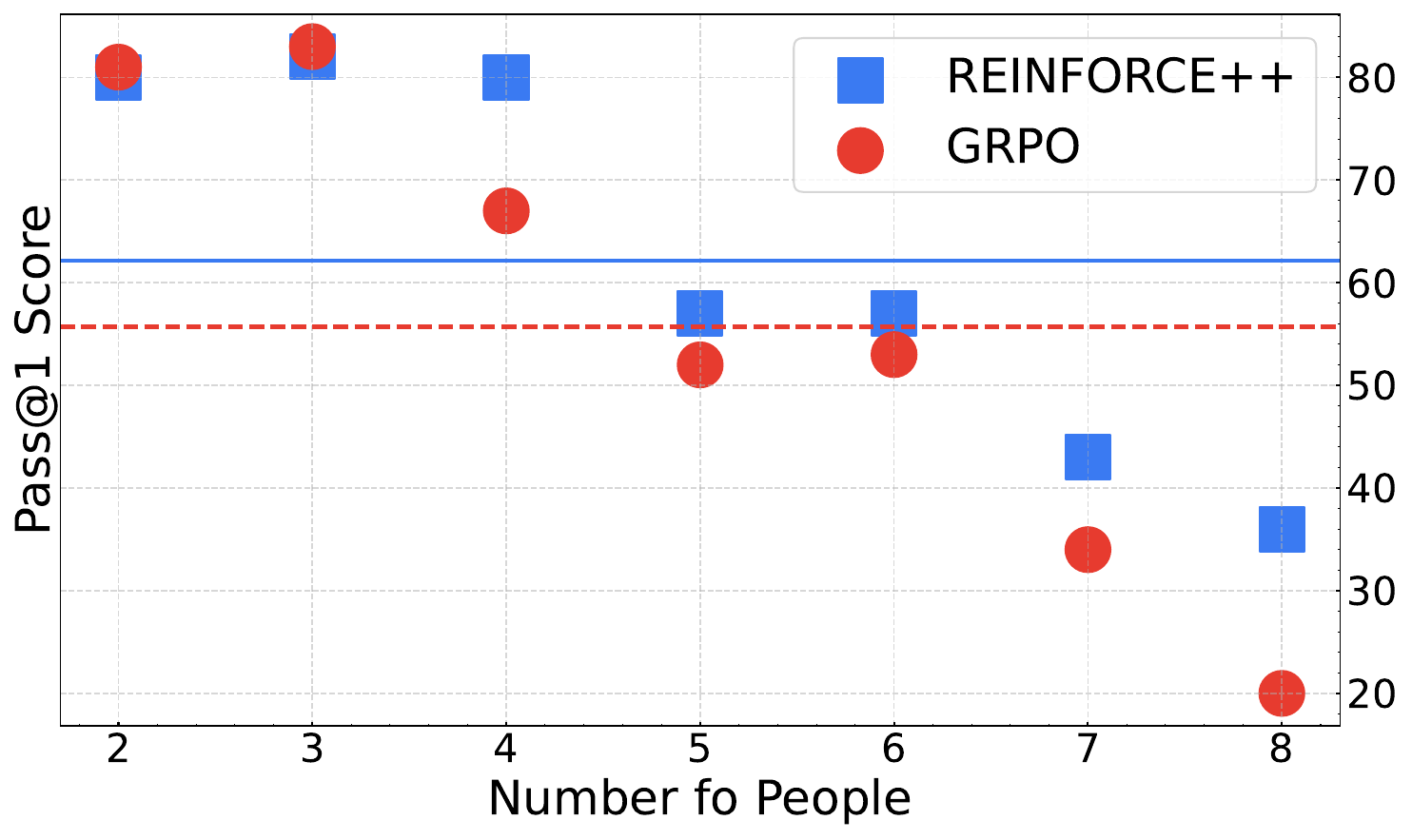}
    \caption{Comparison on logic benchmarks. \ours's global normalization provides better robustness to increasing task difficulty and OOD challenges (8 people).}
    \label{fig:logic_score}
\end{figure}

\paragraph{RL from Zero Setting}
Finally, we trained a Qwen2.5-Math-Base model from zero on MATH dataset splits \citep{guo2025deepseek}. \Cref{Tab:zero-experiment} shows that \ours again achieves better OOD generalization on the more challenging AIME-24 and AMC-23 datasets, while remaining competitive on the in-distribution MATH-500 test set. 

\begin{table*}[!htbp]
    \small
    \centering
    \begin{tabular}{l|ccc}
    \toprule
                       & AIME-24 (OOD) & AMC-23 (OOD) & MATH-500 (ID) \\
    Pass@N             & N = 8   &  N = 8   &  N = 1    \\
    \midrule
    GRPO               & 18.96       & 59.22    & \textbf{73.00}    \\
    \ours & \textbf{21.04}  & \textbf{60.47}   & 72.00     \\ 
    \bottomrule
    \end{tabular}
    \caption{Comparison on RL from Zero. \ours shows superior OOD performance.}
    \label{Tab:zero-experiment}
\end{table*}

\subsection{Multi-step Reinforcement Learning}
Finally, to test performance in a complex, multi-step environment, we utilize the zero-shot agent setup from \cite{mai2025agentrlscalinglaw}, where a Qwen 2.5 Base 7B model must learn to use Python tools to solve mathematical problems. It is a challenging scenario where group sampling ($k > 1$) and reward reshaping are crucial (see Appendix \Cref{appendix:best}).

\paragraph{Experimental Setup}
We adhere to the training and evaluation protocols of the ZeroTIR environment \cite{mai2025agentrlscalinglaw}. The backbone model is Qwen 2.5 Base 7B, trained using OpenRLHF \cite{hu2025openrlhfeasytousescalablehighperformance} with datasets from ORZ \cite{hu2025openreasonerzeroopensourceapproach} and DAPO \cite{yu2025dapoopensourcellmreinforcement}. We assess performance on benchmarks including AIME 2024 \cite{aime2024card}, AIME-2025 \cite{aime2025card}, HMMT FEB-2024/2025, and CMIMC, using the \texttt{average@32} metric.

\paragraph{Experimental Result}
As shown in \Cref{tab:performance_comparison_main}, \varn achieves the highest average accuracy (24.10) across all benchmarks. It significantly outperforms both GRPO (22.58) and the full-critic PPO (21.85). It demonstrates that our stable, critic-free approach is highly effective for complex agentic tasks, surpassing even the heavyweight PPO algorithm. The combination of group-mean reshaping, stable global normalization, and the correct `k2` KL loss proves to be a superior strategy.

\begin{table*}[!htbp]
\small
\centering
\begin{tabular}{l|ccccc|c}
\toprule
\textbf{Algorithm} & AIME 24 & AIME 25 & HMMT 2025 & HMMT 2024 & CMIMC & \textbf{Avg} \\
\midrule
GRPO (local norm) & \textbf{31.66} & 21.87 & 16.97 & 17.70 & 24.68 & 22.58 \\
PPO (critic-based) & 30.20 & 21.66 & 15.00 & 18.43 & 23.95 & 21.85 \\
\textbf{RF++-Baseline (global norm)} & 30.83 & \textbf{27.18} & \textbf{17.91} & \textbf{18.95} & \textbf{25.62} & \textbf{24.10} \\
\bottomrule
\end{tabular}
\caption{Performance Comparison on complex tool-use benchmarks (average@32). \varn outperforms both GRPO (local norm) and PPO (critic-based) models.}
\label{tab:performance_comparison_main}
\end{table*}

\section{Best Practices} \label{appendix:best}

\subsection{General Principle}

This paper introduces two algorithms: \ours and \varn[n], with a key focus on selecting the appropriate algorithm based on task conditions.

Empirical evidence from the open-source community suggests that \varn[n] is particularly effective for sample filtering or more complex scenarios. For instance, in the multi-turn tool-calling setting, a high proportion of void (i.e., non-informative or incorrect) samples can destabilize the training process. In such cases, incorporating a baseline by subtracting the intra-group mean reward significantly improves training stability by effectively filtering out void samples. Moreover, this automatic reward reshaping reduces the need for designing complex reward structures. For example, \varn[n] supports both 0/1 and -1/1 reward schemes, while \ours performs best with symmetric rewards, such as -1/1 in RLVR tasks.

In contrast, for general-domain tasks where prompt diversity and efficiency are paramount, or for tasks where obtaining multiple reward signals for distinct responses is challenging—such as training with Process-Supervised Reward Models~(PRMs) or online real-time sampling—we recommend using \textbf{\ours ($k=1$)}. Our experiments (Section 4.1) show that it provides superior OOD generalization in these settings.

\subsection{Third-Party Validation}
The core principle of \ours, global advantage normalization, has been independently validated and adopted in several \textbf{subsequent} large-scale reasoning systems, confirming its stability and effectiveness.

\paragraph{LitePPO}~\citep{liu2025part}, which effectively combines \varn with a token-level loss, conducted experiments demonstrating that the global standard deviation is superior to the local standard deviation used in GRPO. Their results show more stable training and better generalization, aligning perfectly with our findings.

\paragraph{ScaleRL}~\citep{khatri2025art} performed a detailed ablation study on advantage estimation methods in large-scale (16,000 GPU-hour) experiments. They directly compared batch-level normalization in \ours with prompt-level normalization in GRPO. Their findings concluded that batch-level normalization was "slightly superior in both compute efficiency and final performance," confirming the advantages of global normalization at scale.

\paragraph{DLER}~\citep{liu2025dler} found that when using truncation to control the output length of an LLM, batch-wise (global) normalization remains stable while group-wise (local) normalization shows declining accuracy. The study provides further evidence that global normalization is more robust to variations in training conditions and reward landscapes.

\section{Conclusion}\label{sec:conclusion}
In this paper, we present \ours, a critic-free RLHF framework designed to enhance training stability by addressing fundamental flaws in existing methods. We identified that prior critic-free algorithms, such as GRPO, rely on \textit{prompt-level (local)} advantage normalization, which we prove to be theoretically biased (Appendix \Cref{appendix:proof_GRPO}) and demonstrate to be practically unstable.

Our solution, \textbf{Global Advantage Normalization}, normalizes advantages across the entire batch, providing a stable and \textit{effectively unbiased} estimator (whose bias vanishes as batch size increases). We proposed two variants tailored for different needs:
\begin{enumerate}
    \item \ours: For general-domain RLHF, we show the single-sample ($k=1$) approach is highly efficient and achieves superior OOD generalization. We also show that its $k>1$ application is robust for reasoning tasks (Section 4.2).
    \item \varn: For complex reasoning and agent tasks, this variant combines group-mean reshaping with stable global normalization and a theoretically sound $k_2$ estimator for KL.
\end{enumerate}

Our empirical results, supported by independent validation from multiple third-party systems, confirmed this specialized approach. \ours demonstrated state-of-the-art generalization in general RLHF. \varn showed dramatic improvements in stability, preventing overfitting in low-data regimes and outperforming both GRPO and full-critic PPO in complex, long-horizon tool-use tasks. This work demonstrates that a theoretically sound and stable approach without the critic model can be more efficient and effective than traditional PPO.

\clearpage

\bibliography{reference}


\appendix

\onecolumn
\begin{center}
    {\Large \bfseries Appendix}
\end{center}

\section{Proof: The GRPO Advantage Estimator is Biased} \label{appendix:proof_GRPO}

\subsection{Assumptions and Settings}
We observe \(N\) rewards \(r_i\) for a prompt, and assume the true baseline is \(\theta\), such that
\[
r_i = \theta + \epsilon_i, \quad \epsilon_i \sim \mathcal{N}(0, \sigma^2),\quad i=1,\dots,N,
\]
with all advantage value \(\epsilon_i\) independent. Define
\[
\bar\epsilon = \frac{1}{N} \sum_{j=1}^N \epsilon_j, \quad
D = \sqrt{ \frac{1}{N} \sum_{j=1}^N (\epsilon_j - \bar\epsilon)^2 }, \quad
A_i = \frac{ \epsilon_i - \bar\epsilon }{D }.
\]
We will prove the following:

\begin{theorem}
For any finite \(N \ge 2\), the advantage estimator \(A_i\) is biased:
\[
\mathbb{E}[A_i \mid \epsilon_i] \ne \epsilon_i.
\]
\end{theorem}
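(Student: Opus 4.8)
The plan is to avoid explicit integration and instead exploit a structural mismatch: the studentized quantity $A_i$ is \emph{deterministically bounded}, while the noise term $\epsilon_i$ that it is meant to estimate is an unbounded Gaussian. A bounded random variable cannot have a conditional expectation equal to an unbounded one, so the identity $\mathbb{E}[A_i\mid\epsilon_i]=\epsilon_i$ must fail on a set of positive probability.

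First I would record the almost-sure bound $|A_i|\le\sqrt{N-1}$. Setting $v_j:=\epsilon_j-\bar\epsilon$, we have $\sum_j v_j=0$, $D=\|v\|/\sqrt N$ with $\|\cdot\|$ the Euclidean norm, and hence $A_i=\sqrt N\,v_i/\|v\|$. Since $v$ is orthogonal to the all-ones vector $\mathbf 1$, the coordinate $v_i$ equals $\langle e_i,v\rangle=\langle e_i-\tfrac1N\mathbf 1,\,v\rangle$, so Cauchy--Schwarz together with $\|e_i-\tfrac1N\mathbf 1\|^2=1-\tfrac1N$ gives $|v_i|\le\sqrt{(N-1)/N}\,\|v\|$, i.e.\ $|A_i|\le\sqrt{N-1}$. (The bound is sharp: it is attained when a single residual dominates the others.) The quantity $A_i$ is undefined only when $\|v\|=0$, i.e.\ all $\epsilon_j$ coincide, an event of probability zero under the Gaussian model, so $A_i$ is bounded almost surely.

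Then I would conclude. Conditioning cannot enlarge the essential range, so $|\mathbb{E}[A_i\mid\epsilon_i]|\le\mathbb{E}[\,|A_i|\mid\epsilon_i\,]\le\sqrt{N-1}$ almost surely, whereas $\epsilon_i\sim\mathcal N(0,\sigma^2)$ puts positive mass on $\{|\epsilon_i|>\sqrt{N-1}\}$; on that event the two sides differ, which gives $\mathbb{E}[A_i\mid\epsilon_i]\ne\epsilon_i$. It is worth pairing this with the intuition the paper stresses: if the denominator $D$ were independent of the numerator $\epsilon_i-\bar\epsilon$ one could factor the expectation and retain a faithful (merely rescaled) estimator, but $D$ is a function of the \emph{entire} residual vector, which contains $\epsilon_i-\bar\epsilon$, so the two are dependent and the division injects exactly the nonlinear, saturating shrinkage bounded above. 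A quick complementary check valid when $\sigma>1$: the identities $\sum_i A_i=0$ and $\sum_i A_i^2=N$ hold pointwise, so by exchangeability $\mathbb{E}[A_i]=0$ and $\mathbb{E}[A_i^2]=1$, whence $\mathrm{Var}(A_i)=1$; if $\mathbb{E}[A_i\mid\epsilon_i]=\epsilon_i$ the law of total variance would force $1\ge\mathrm{Var}(\epsilon_i)=\sigma^2$, a contradiction.

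I do not expect a serious obstacle — the argument is short — but the points that need care are: (i) handling the degenerate event $D=0$, which is null and therefore harmless; (ii) reading the theorem's ``$\ne$'' as inequality of the two functions of $\epsilon_i$ on a positive-probability set, which the boundedness argument supplies directly; and (iii) resisting the temptation to produce a closed form for $\mathbb{E}[A_i\mid\epsilon_i]$, since for general $N$ the relevant Gaussian integral is not elementary, which is why the boundedness route is cleanest. If one wanted the stronger ``not even proportional to $\epsilon_i$'' statement, one can further note that odd symmetry in $(\epsilon_j)_{j\ne i}$ forces $\mathbb{E}[A_i\mid\epsilon_i=0]=0$ while the $\sqrt{N-1}$ saturation at large $|\epsilon_i|$ already precludes linearity.
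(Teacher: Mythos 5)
Your proof is correct, and it takes a genuinely different route from the paper's. You exploit the deterministic bound on studentized residuals: writing $v_j=\epsilon_j-\bar\epsilon$ and $A_i=\sqrt{N}\,v_i/\|v\|$, the orthogonality $\langle \mathbf 1,v\rangle=0$ plus Cauchy--Schwarz gives $|A_i|\le\sqrt{N-1}$ almost surely, so $\mathbb{E}[A_i\mid\epsilon_i]$ is essentially bounded while $\epsilon_i$ is not, and the two must differ on the positive-probability event $\{|\epsilon_i|>\sqrt{N-1}\}$; the null event $D=0$ is handled correctly, and your law-of-total-variance check (using $\sum_i A_i=0$, $\sum_i A_i^2=N$ pointwise) is a valid complementary argument for $\sigma>1$. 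The paper instead tries to exhibit the \emph{form} of the bias: it computes $\mathbb{E}[\epsilon_i-\bar\epsilon\mid\epsilon_i]=(1-\tfrac1N)\epsilon_i$, shows $\mathbb{E}[D^2\mid\epsilon_i]=\alpha+\beta\epsilon_i^2$ depends on $\epsilon_i$, and argues $\mathbb{E}[A_i\mid\epsilon_i]=(1-\tfrac1N)\epsilon_i\,g(\epsilon_i)$ with $g(\epsilon_i)=\mathbb{E}[1/D\mid\epsilon_i]$ non-constant. That route is more informative about the mechanism (a $(1-\tfrac1N)$ shrinkage plus a nonlinear distortion through the denominator), but as written it leans on a non-rigorous Taylor approximation for $g$ and on a symmetry claim for the cross term $\mathbb{E}[(\sum_{j\ne i}\epsilon_j)/D\mid\epsilon_i]=0$ that requires more care, since $D$ is not invariant under flipping the signs of $\{\epsilon_j\}_{j\ne i}$ when $\epsilon_i\ne 0$. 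Your boundedness argument sidesteps both issues entirely and is fully rigorous for every $N\ge 2$ and every $\sigma>0$; what it gives up is any quantitative description of the bias, which is what the paper's (heuristic) computation is after.
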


\begin{proof}
We can prove our findings with the following steps:\\
\textbf{Step 1: Numerator Bias}
We rewrite the numerator:
\[
\epsilon_i - \bar\epsilon = \left(1 - \frac{1}{N} \right) \epsilon_i - \frac{1}{N} \sum_{j \ne i} \epsilon_j.
\]
Since the \(\epsilon_j\) for \(j \ne i\) are zero-mean and independent of \(\epsilon_i\),
\[
\mathbb{E}[\epsilon_i - \bar\epsilon \mid \epsilon_i] = \left(1 - \frac{1}{N} \right) \epsilon_i.
\]

\textbf{Step 2: Denominator Depends on \(\epsilon_i\)}

\textit{(a) Compute \(\mathbb{E}[D^2 \mid \epsilon_i]\):}
By definition,
\[
D^2 = \frac{1}{N} \sum_{j=1}^N (\epsilon_j - \bar\epsilon)^2 = \frac{1}{N} \sum_{j=1}^N \epsilon_j^2 - \bar\epsilon^2.
\]
Since
\[
\bar\epsilon = \frac{1}{N} \left( \epsilon_i + \sum_{j \ne i} \epsilon_j \right),
\]
and conditioning on \(\epsilon_i\) keeps the \(\epsilon_j\), \(j \ne i\), i.i.d.\ \(\mathcal{N}(0, \sigma^2)\), we obtain:
\[
\mathbb{E} \left[ \sum_{j=1}^N \epsilon_j^2 \mid \epsilon_i \right] = \epsilon_i^2 + (N-1) \sigma^2,
\]
\begin{align*}
\mathbb{E}[\bar\epsilon^2 \mid \epsilon_i]
&= \frac{1}{N^2} \mathbb{E} \left[ \left( \epsilon_i + \sum_{j \ne i} \epsilon_j \right)^2 \Big| \epsilon_i \right] \\
&= \frac{1}{N^2} \left( \epsilon_i^2 + 2 \epsilon_i \cdot \underbrace{\mathbb{E}\left[ \sum_{j \ne i} \epsilon_j \right]}_{0} + \mathbb{E} \left[ \left( \sum_{j \ne i} \epsilon_j \right)^2 \right] \right) \\
&= \frac{\epsilon_i^2 + (N-1) \sigma^2}{N^2}.
\end{align*}
Subtracting:
\begin{align}
\mathbb{E}[D^2 \mid \epsilon_i]
&= \frac{1}{N} (\epsilon_i^2 + (N-1) \sigma^2) - \frac{\epsilon_i^2 + (N-1)\sigma^2}{N^2} \nonumber \\
&= \underbrace{\frac{(N-1)^2}{N^2} \sigma^2}_{\alpha} + \underbrace{\frac{N-1}{N^2}}_{\beta} \epsilon_i^2 = \alpha + \beta \epsilon_i^2.
\label{eq:condD2}
\end{align}

\textit{(b) \(g(\epsilon_i)\) is Not Constant:}
Let \(g(\epsilon_i) = \mathbb{E}\left[1/D \mid \epsilon_i \right]\) and \(\mu(\epsilon_i) = \mathbb{E}[D^2 \mid \epsilon_i] = \alpha + \beta \epsilon_i^2\).

Using Taylor expansion of \(f(x) = 1/\sqrt{x}\) around \(x_0 = \mu(\epsilon_i)\):
\[
f(x) = \frac{1}{\sqrt{x_0}} - \frac{1}{2} \frac{x-x_0}{x_0^{3/2}} + \frac{3}{8} \frac{(x-x_0)^2}{x_0^{5/2}} + O((x-x_0)^3)
\]

Taking conditional expectation:
\[
\begin{aligned}
g(\epsilon_i) 
&= \mathbb{E}[1/D \mid \epsilon_i] 
= \mathbb{E}\left[\frac{1}{\sqrt{D^2}} \mid \epsilon_i\right] 
= \mathbb{E}\left[f(D^2) \mid \epsilon_i\right], \quad \text{where } f(x) = \frac{1}{\sqrt{x}} \\
&\approx f(\mu(\epsilon_i)) + f'(\mu(\epsilon_i)) \cdot \mathbb{E}[D^2 - \mu(\epsilon_i) \mid \epsilon_i] + \frac{f''(\mu(\epsilon_i))}{2} \cdot \mathbb{E}[(D^2 - \mu(\epsilon_i))^2 \mid \epsilon_i] \\
&= \frac{1}{\sqrt{\mu(\epsilon_i)}} 
- \frac{1}{2 \mu(\epsilon_i)^{3/2}} \cdot \underbrace{\mathbb{E}[D^2 - \mu(\epsilon_i) \mid \epsilon_i]}_{=\,0} 
+ \frac{3}{8 \mu(\epsilon_i)^{5/2}} \cdot \text{Var}(D^2 \mid \epsilon_i) \\
&= \frac{1}{\sqrt{\mu(\epsilon_i)}} + \frac{3}{8} \cdot \frac{\text{Var}(D^2 \mid \epsilon_i)}{\mu(\epsilon_i)^{5/2}}
\end{aligned}
\]

Since \(\mu(\epsilon_i) = \alpha + \beta \epsilon_i^2\) with \(\beta > 0\), the first term alone shows that \(g(\epsilon_i)\) depends on \(\epsilon_i^2\) and hence is not constant.

\textbf{Step 3: Putting It Together}
Decomposing \(A_i\),
\[
A_i = \frac{\epsilon_i - \bar\epsilon}{D} = \left(1 - \frac{1}{N} \right) \frac{\epsilon_i}{D} - \left( \frac{1}{N} \sum_{j \ne i} \epsilon_j \right) \cdot \frac{1}{D}.
\]
For fixed \(\epsilon_i\), the conditional distribution of \(\sum_{j \ne i} \epsilon_j\) is symmetric about zero, while \(1/D\) is always positive. Thus:
\[
\mathbb{E} \left[ \left( \frac{-1}{N} \sum_{j \ne i} \epsilon_j \right) \cdot \frac{1}{D} \Big| \epsilon_i \right] = 0.
\]
It follows that
\[
\mathbb{E}[A_i \mid \epsilon_i] = \left(1 - \frac{1}{N} \right) \epsilon_i \cdot \mathbb{E} \left[ \frac{1}{D} \mid \epsilon_i \right] = \left(1 - \frac{1}{N} \right) \epsilon_i \cdot g(\epsilon_i).
\]

\textbf{Step 4: Concluding the Bias}
If \(A_i\) were unbiased, we would have:
\[
\left(1 - \frac{1}{N} \right) g(\epsilon_i) \equiv 1 \quad \Rightarrow \quad g(\epsilon_i) \equiv \frac{N}{N-1},
\]
which contradicts Step 2. Therefore, for any finite \(N \ge 2\),
\[
\mathbb{E}[A_i \mid \epsilon_i] \ne \epsilon_i.
\]
Hence \(A_i\) is a biased estimator.

\end{proof}

\subsection{Why Use Global Batch Normalization?} \label{appendix:whyglobal}

We observe that as \( N \to \infty \), the denominator of the estimator converges to the constant \( \sigma \), and the bias in the numerator vanishes. This insight underlies our decision to adopt global batch normalization in both \textsc{REINFORCE++} and \textsc{REINFORCE++-Baseline}, as the global batch size ($N_{global}$) is typically much larger (e.g., 1024) than the group batch size ($N_{group}$, e.g., 4 or 8) in practice. As $N_{global} \to \infty$, the sample statistics $\mu_b$ and $\sigma_b$ converge to true constants, making the estimator effectively unbiased.

\clearpage
\section{Algorithm Details}

\subsection{KL Penalty Design}  \label{appendix:kl_design}

In non-critic algorithms like REINFORCE++-Baseline, a KL divergence term is often added as a separate loss to constrain the policy $\pi_{\theta}$ from deviating too far from a reference policy $\pi_{ref}$. In practice, this expectation is estimated using samples, and three common estimators are used.

Given the importance ratio $\delta(y) = \pi_{ref}(y|\cdot) / \pi_{\theta}(y|\cdot)$ (where $\pi_{\theta}$ is the sampling policy and $\pi_{ref}$ is the reference):
\begin{itemize}
    \item $k_{1}(y) = -\log \delta(y) = \log \frac{\pi_{\theta}(y|\cdot)}{\pi_{ref}(y|\cdot)}$
    \item $k_{2}(y) = \frac{1}{2}(\log \delta(y))^{2} = \frac{1}{2}\left(\log \frac{\pi_{ref}(y|\cdot)}{\pi_{\theta}(y|\cdot)}\right)^{2}$
    \item $k_{3}(y) = \delta(y) - 1 - \log \delta(y)$
\end{itemize}

We analyze which of these (k1, k2, or k3) is the correct choice when used as a separate loss term.

\paragraph{Theoretical Reverse KL Gradient}
We are training with samples from $\pi_{\theta}$, so we are approximating the **Reverse KL (RKL)**, $D_{KL}(\pi_{\theta}||\pi_{ref})$. The practical policy gradient for RKL is (more details are in \citep{liu2025rethinking}):
$$
\nabla_{\theta}\mathcal{J}_{RKL}(\theta) = \mathbb{E}_{y\sim\pi_{\theta}(\cdot|x)}\left[\left(\log\frac{\pi_{\theta}(y|x)}{\pi_{ref}(y|x)}\right)\nabla_{\theta}\log \pi_{\theta}(y|x)\right]
$$

\paragraph{k1 Analysis}
We can **exclude k1** as a loss term. Its gradient does not depend on $\pi_{ref}$ and thus provides no constraining effect. The loss $\mathcal{J}_{k{1}} = \mathbb{E}_{y\sim\pi_{\theta}}[\log \pi_{\theta} - \log \pi_{ref}]$ differentiates to $\nabla_{\theta}\mathcal{J}_{k{1}} = \mathbb{E}_{y\sim\pi_{\theta}}[\nabla_{\theta}\log \pi_{\theta}]$, as the $\pi_{ref}$ term vanishes. (Note: k1 *is* used inside the reward for REINFORCE++, but not as a separate loss term).

\paragraph{k2 Analysis}
The k2 estimator provides a gradient that is **equivalent to the theoretical Reverse KL gradient**.

The k2 loss is (noting that $(\log \frac{\pi_{\theta}}{\pi_{ref}})^2 = (\log \frac{\pi_{ref}}{\pi_{\theta}})^2$):
$$
\mathcal{J}_{k{2}as~loss}(\theta)=\mathbb{E}_{y\sim\pi_{\theta}(\cdot|x)}\left[\frac{1}{2}\left(\log\frac{\pi_{\theta}(y|x)}{\pi_{ref}(y|x)}\right)^{2}\right]
$$
Differentiating the term inside the expectation gives:
\begin{align*}
\nabla_{\theta}\left[\frac{1}{2}\left(\log\frac{\pi_{\theta}}{\pi_{ref}}\right)^{2}\right] &= \left(\log\frac{\pi_{\theta}}{\pi_{ref}}\right) \cdot \nabla_{\theta}\left(\log\frac{\pi_{\theta}}{\pi_{ref}}\right) \\
&= \left(\log\frac{\pi_{\theta}}{\pi_{ref}}\right) \cdot \nabla_{\theta}(\log \pi_{\theta} - \log \pi_{ref}) \\
&= \left(\log\frac{\pi_{\theta}}{\pi_{ref}}\right)\nabla_{\theta}\log \pi_{\theta}
\end{align*}
When we take the expectation over $y \sim \pi_{\theta}$, the gradient of the k2 loss becomes:
$$
\mathbb{E}_{y\sim\pi_{\theta}}\left[\left(\log\frac{\pi_{\theta}(y|x)}{\pi_{ref}(y|x)}\right)\nabla_{\theta}\log \pi_{\theta}(y|x)\right]
$$
This **perfectly matches** the practical RKL gradient. Therefore, k2 is the theoretically correct and stable choice.

\paragraph{k3 Analysis}
The k3 gradient is problematic. It is theoretically an estimator for the **Forward KL** ($KL(\pi_{ref}||\pi_{\theta})$), not the Reverse KL. This is a mismatch, as we are sampling from $\pi_{\theta}$, but the Forward KL gradient requires sampling from $\pi_{ref}$.

This method, used in GRPO, suffers from two major issues:
\begin{itemize}
    \item \textbf{Extremely High Variance:} When $\pi_{\theta}(y)$ becomes tiny for a sample $y$ where $\pi_{ref}(y)$ is moderate, the importance weight $\frac{\pi_{ref}(y)}{\pi_{\theta}(y)}$ in the gradient estimator explodes, leading to "infinite variance".
    \item \textbf{Numerical Instability:} It requires computing $p/q$ via $\exp(\log p - \log q)$, which is prone to overflow.
\end{itemize}
This explains why methods using k3 require frequent resetting of $\pi_{ref}$ to prevent the policies from diverging and the k3 estimator from becoming unstable.

\paragraph{Conclusion}
Based on this analysis, the **k2 estimator is the optimal choice** for a KL loss term as it correctly and stably estimates the Reverse KL gradient. We therefore use the $k_2$ estimator in our REINFORCE++-Baseline algorithm.

\subsection{Implementation Tricks} \label{appendix:tricks}
\paragraph{Token-level Advantage}
The advantage $A_{q,o_t}^{norm}$ is computed at the token level. For $t < T$ (where $T$ is the sequence length), the advantage is set to 0. For the final token $t=T$, the advantage is the normalized reward $A_{q,o_T}^{norm}$. This is standard practice in RLHF for LLMs.

\paragraph{Batch Construction}
For REINFORCE++ ($k=1$), a global batch of size $N$ consists of $N$ different prompts. For REINFORCE++-Baseline ($k=4$), a global batch of size $N=1024$ might consist of $1024/4 = 256$ unique prompts. The global mean/std are computed over all $N$ samples.

\paragraph{Mini-Batch Updates}
To enhance training efficiency, we implement mini-batch updates with the following characteristics:
\begin{itemize}[leftmargin=*]
    \item \textbf{Batch Processing:} Data is processed in smaller, manageable chunks rather than full-batch updates.
    \item \textbf{Multiple Updates:} Each mini-batch allows for multiple parameter updates, improving convergence rates.
    \item \textbf{Stochastic Optimization:} Introduces beneficial randomness for better generalization.
\end{itemize}

\paragraph{Reward Normalization and Clipping}

We implement comprehensive reward processing to stabilize training:
\begin{itemize}[leftmargin=*]
    \item \textbf{Normalization:} Standardizes rewards using z-score normalization (our global normalization) to mitigate outliers.
    \item \textbf{Clipping:} Constrains reward values within predefined bounds to avoid instability.
    \item \textbf{Scaling:} Applies appropriate scaling factors for numerical stability during updates.
\end{itemize}

\clearpage

\section{Acknowledgements}
\begin{itemize}[leftmargin=*]
    \item Jian Hu: Conceived the ideas, implemented the REINFORCE++ and \varn algorithms, and contributed to the theoretical proof of the GRPO advantage estimator.  
    \item Jason Klein Liu: Implemented the experimental framework, fine-tuned hyperparameters, wrote the manuscript, and provided GPU resources.  
    \item Haotian Xu: Conducted comparative experiments between \varn and complex tool-calling as well as agent-based scenarios.  
    \item Wei Shen: Supervised the overall project, designed the main experiments, and led the paper writing.  
\end{itemize}

\end{document}